\newtheorem{lemma}{Lemma}
\newtheorem{example}{Example}
\newtheorem{definition}{Definition}
\DeclareMathOperator*{\argmin}{arg\,min}
\DeclareMathOperator*{\sgn}{\mathrm{sign}}
\newlength{\defbaselineskip}
\newcommand{\mailto}[1]{\href{mailto:#1@stanford.edu}{#1}}
\title{Minimizing Close-k Aggregate Loss Improves Classification}
\author{
  Bryan He,
  James Zou\\
  Stanford University \\
  \texttt{\{\mailto{bryanhe},\mailto{jamesz}\}@stanford.edu}
}
\date{}
\begin{document}

\maketitle

\begin{abstract}
In classification, the de facto method for aggregating individual losses  is the average loss.
When the actual metric of interest is 0-1 loss, it is common to minimize the average surrogate loss for some well-behaved (e.g. convex) surrogate.
Recently, several other aggregate losses such as the maximal loss and average top-$k$ loss were proposed as alternative objectives to address shortcomings of the average loss.
However, we identify common classification settings---e.g. the data is imbalanced, has too many easy or ambiguous examples, etc.---when average, maximal and average top-$k$ all suffer from suboptimal decision boundaries, even on an infinitely large training set.
To address this problem, we propose a new classification objective called the close-$k$ aggregate loss, where we adaptively minimize the loss for points close to the decision boundary.
We provide theoretical guarantees for the 0-1 accuracy when we optimize close-$k$ aggregate loss.
We also conduct systematic experiments across the PMLB and OpenML benchmark datasets. Close-$k$ achieves significant gains in 0-1 test accuracy---improvements of $\geq 2$\% and $p<0.05$---in over 25\% of the datasets compared to average, maximal and average top-$k$.
In contrast, the previous aggregate losses outperformed close-$k$ in less than 2\% of the datasets. 
\end{abstract}

\section{Introduction}

In the supervised learning setting, we aim to learn a function $f: \mathcal{X} \rightarrow \mathcal{Y}$
based on a labeled training set $S = \{(\mathbf{x}_i, y_i)\}_{i=1}^{n}$.
Commonly, $f$ is parameterized by a vector $\theta$.

We learn $f$ by minimizing an aggregate loss over the individual losses of the elements in the training set, along with a regularization term to penalize the complexity of $f$.
This can be formulated as $\argmin_\theta L(\{\ell(y_i, f_\theta(x_i))\mid i\in[n]\}) + \Lambda(\theta)$, where $\ell(y, \hat{y})$ is the individual loss of a prediction $\hat{y}$ when the true label is $y$, $L$ aggregates the individual losses, and $\Lambda$ is a measure of the complexity of $f_\theta$.

A standard setting in supervised learning is classification, where $\mathcal{Y}$ is a discrete set.
In classification problems, we are commonly interested in the 0-1 loss, ${\mathbf{1}[y \neq \hat{y}]}$.
However, the 0-1 loss is difficult to optimize, so surrogate losses such as the logistic loss, $\log(1 + \exp(-y\hat{y}))$, and the hinge loss, $\max(0, 1-y\hat{y})$, and are commonly used instead.

\begin{figure*}[htbp]
    \begin{subfigure}[b]{0.246\textwidth}
        \begin{center}
            \includegraphics[scale=1.0]{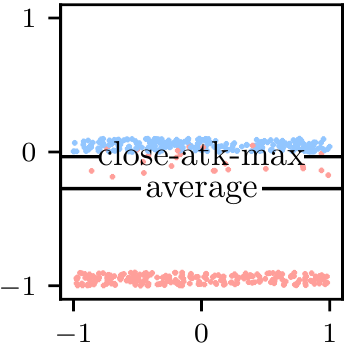}
            \caption{Easy examples}
            \label{fig: easy}
        \end{center}
    \end{subfigure}
    \begin{subfigure}[b]{0.246\textwidth}
        \begin{center}
            \includegraphics[scale=1.0]{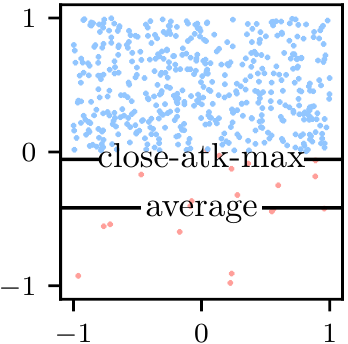}
            \caption{Imbalance}
            \label{fig: imbalance}
        \end{center}
    \end{subfigure}
    \begin{subfigure}[b]{0.246\textwidth}
        \begin{center}
            \includegraphics[scale=1.0]{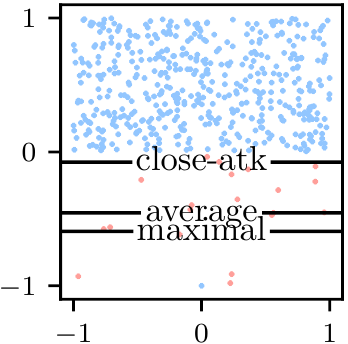}
            \caption{Imbalance + outlier}
            \label{fig: outlier}
        \end{center}
    \end{subfigure}
    \begin{subfigure}[b]{0.246\textwidth}
        \begin{center}
            \includegraphics[scale=1.0]{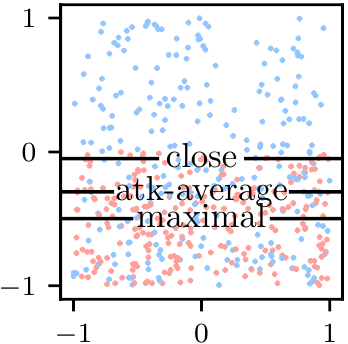}
            \caption{Ambiguous examples}
            \label{fig: ambiguous}
        \end{center}
    \end{subfigure}
    \caption{Comparison of various aggregate losses on synthetic datasets using logistic loss as the individual loss. In all of these datasets, the optimal 0-1 error is achieved by splitting the vertical axis at 0.}
    \label{fig: synth}
\end{figure*}

The most common aggregate loss is the average loss $L(\{\ell_i\}_{i=1}^{n}) = \frac{1}{n}\sum_{i=1}^{n} \ell_i$, which corresponds to the empirical risk minimization setting \cite{vapnik1992nips}.
Despite its ubiquitous use, aggregating individual logistic losses with an average can result in suboptimal classification even on the training set.
For example, when many easy examples are present (Figure~\ref{fig: easy}) or when the data set is imbalanced (Figure~\ref{fig: imbalance}), average loss result in errors even on the training set.
In both of these cases, the average loss is rewarded for continuing to decrease the error on correctly classified examples, even at the expense of introducing errors on the other class.

Since the use of average loss is so common, many techniques, including robust estimation \cite{huber1964aos}, and more recently, the maximal and average top-$k$ losses \cite{shalevshwartz2016icml,fan2017nips}, have been studied as alternatives to address its shortcomings.
Interestingly, these examples highlight issues that are unaddressed by robust estimation techniques, which focus on cases where suboptimal classification results from the points with large errors.

The maximal loss is able to handle these cases, but it is very sensitive to outliers (Figure~\ref{fig: outlier}) \cite{shalevshwartz2016icml}.
The average top-$k$ loss was proposed to allow a tradeoff between the average and maximal losses \cite{fan2017nips}.
However, in the presence of ambiguous examples that cannot be classified correctly, the average top-$k$ loss is unable to make an optimal classification (Figure~\ref{fig: ambiguous}).
This is caused by the fact that the average top-$k$ loss is still rewarded for reducing the loss of the ambiguous examples, even though it is not possible to classify those examples correctly.

We now note that the average logistic and the average hinge loss are both classification calibrated losses~\cite{lin2004spl,bartlett2006jasa}.
A key property of classification calibrated losses is that its minimizer leads to the Bayes optimal classification rule---that is, its minimizer achieves the Bayes risk, which is the minimum possible error.
At a glance, this may appear to imply that the average logistic loss (corresponding to logistic regression) and the average hinge loss (corresponding to linear SVMs) will obtain the optimal accuracy for a linear decision boundary as data size approaches infinity.
However, it is clear that this is not the case from the examples in Figure~\ref{fig: synth}.
This is due to the fact that classification calibration is concerned with the setting where the classification function is selected from set of all measurable functions.
This is not a typical setting---in practice, supervised learning selects from a restricted set of functions: for example, for logistic regression and linear SVMs, the decision boundary must be a linear function.
Due to this difference, both logistic regression and linear SVMs can result in suboptimal classifications, even in the simple cases in Figure~\ref{fig: synth}.


\paragraph{Our contributions} In Section \ref{sec: motivation}, we provide additional background for classification calibration and explain why it is not sufficient to capture the suboptimal behavior in Figure \ref{fig: synth}. We introduce a more precise definition that is capable of describing the behavior in these cases. In Section \ref{sec: method}, we introduce the close-$k$ aggregate loss, which focuses on the examples near the decision boundary, which is capable of handling these cases. In Section \ref{sec: exp}, we provide extensive experimental evidence, and find that we obtain statistically significant improvements on more than 20\% of standard benchmark datasets.
We provide theoretical guarantees and generalization bounds for the close-$k$ loss in Section \ref{sec: analysis}.

\section{Motivation}
\label{sec: motivation}

In typical binary classification problems, we are interested in the 0-1 loss.
In this case, the \textit{Bayes optimal risk} is the minimum possible loss, which is achieved by the \textit{Bayes optimal decision rule} \cite{lin2004spl,bartlett2006jasa}.
A loss is \textit{classification calibrated} if its minimizer has the same sign as the Bayes decision rule, which implies that it achieves the Bayes risk.
We note that the average loss and average \mbox{top-$k$} loss, for sufficiently large $k$, are both classification calibrated~\cite{fan2017nips}.

More formally, the Bayes optimal decision rule is $f^*(x) = \sgn\left(2\eta(x) - 1\right)$, where \mbox{$\eta(x) = \Pr(Y = 1\mid X = \mathbf{x})$} is the conditional probability of the positive class.
The Bayes optimal risk is then given by $R^* = \mathbb{E}\left[L_{0-1}(f^*)\right]$, where the expectation is over the data distribution.
Next, let $f_n$ be the minimizer of the aggregate loss over $S_n = \{(\mathbf{x}_i, y_i)\}_{i=1}^{n}$.
The loss is classification calibrated if $\lim_{n\rightarrow\infty} \mathbb{E}\left[L_{0-1}(f_n)\right] = R^*$.

However, as we previously noted, this does not correspond to a typical learning setting.
In general, we do not select our classification function from the set of all measureable functions.
Instead, we select a function from a parameterized family $\Omega$.
For example, the commonly used logistic regression and linear SVM both restrict the classification function to be a linear function.
To better understand this setting, we introduce a more precise definition for classification calibration:
\begin{definition}
    An aggregate loss function is classification calibrated with respect to a family of classification functions $\Omega$ if and only if $\lim_{n\rightarrow\infty} \mathbb{E}\left[L_{0-1}(f_n)\right] = R^*$, where $f_n$ is the minimizer of the aggregate loss over $S_n$ in $\Omega$, and $R^* = \min_{f\in\Omega} \mathbb{E}\left[L_{0-1}(f)\right]$ is the optimal risk of a function in $\Omega$.
\end{definition}

Under this more precise definition, which more closely describes typical learning settings, we find that classification calibrated losses are not necessarily classification calibrated with respect to linear decision boundaries.
In the remainder of this section, we discuss two simple examples where average top-$k$, average, and maximal losses are not calibrated with respect to linear decision boundaries. 
Due to the fact that the average top-$k$ losses, which averages the largest $k$ losses, is a generalization of the average loss ($k=n$) and the maximal loss ($k=1$), we focus on it in the rest of this section.

\begin{example}
    Consider the training set consisting of $n$ instances of $(x=-1,y=-1)$, $n$ instances of $(x=+1, y=+1)$, 1 instance of $(x=+M, y=-1)$, and 1 instance of $(x=-M, y=+1)$, where $M\gg n$.
\end{example}

This training set consists of two outliers, but is otherwise fully separable by a linear decision boundary.
Suppose we use a linear decision boundary of the form $f(x) = wx+b$.
Due to the two outliers, the decision boundary is flipped when minimizing the average top-$k$ loss, for any choice of $k$, resulting in only the two outliers being classified correctly. 
As a result, average top-$k$ loss will result in a 0-1 loss of $2n$, rather than the optimal loss of 2.

\begin{example}
    Consider the training set consisting of $n$ negative ($y=-1$) instances with $x$ uniform between $-1$ and $1$, and $n$ positive ($y=+1$) instances with $x$ uniform between $0$ and $1$.
\end{example}

This example is closely related to the case in Figure~\ref{fig: synth}(d).
In this case, the optimal linear classifier gets an accuracy of $0.75$, which corresponds to a 0-1 loss of $n/2$, by classifying all points with $x<0$ as positive and all points with $x>0$ as negative.
However, optimizing this function for average top-$k$ loss will result in a 0-1 loss strictly greater than $n/2$.

The datasets in Figure \ref{fig: synth} and this section highlight several difficult cases that can cause suboptimal behavior:
\begin{itemize}[nolistsep]
    \item[$-$] imbalanced classes
    \item[$-$] easy examples far from the decision boundary
    \item[$-$] outliers
    \item[$-$] ambiguous examples that can't be classified correctly
\end{itemize}
We observe that these cases result in suboptimal behavior due to the fact that average loss is encouraged to decrease the individual loss in examples far from the decision boundary, even though the resulting accuracy of the classifier may decrease in response. 
Using this observation, we introduce the close-$k$ loss, which focuses on points close to the decision boundary, as a method to avoid these behaviors.

\section{Method}
\label{sec: method}

In this section, we introduce the close-$k$ aggregate loss and provide key properties of the loss.
In particular, we show that the close-$k$ loss provides a tradeoff between classification calibration and convexity.
Using this property, we provide a training scheme that is not sensitive to initialization.
We then present experimental results in \Cref{sec: exp}, and provide more detailed properties of the close-$k$ loss and generalization guarantees in \Cref{sec: analysis}.

\subsection{Close-k Aggregate Loss}
We propose to use the $k$ points closest to the decision boundary as an aggregate loss, which we call the \mbox{close-$k$} aggregate loss.
More formally, let $T$ be the threshold in the individual loss where an example is classified correctly.
For example, in logistic loss, $T=\log 2$, and in hinge loss, $T=1$.
Next, let $\ell_{[i]}$ denote the individual loss $\ell_j$ with the $i$-th smallest $|\ell_j - T|$---that is, the individual loss of the $i$-th closest example to the decision boundary.
The close-$k$ is defined as:
\begin{align*}
    &\phantom{{}={}}
    L_{\textrm{close-$k$}}(\{\ell_i\}_{i=1}^{n}) \\
    &=
    \sum_{i=1}^{n}
    \begin{dcases}
        \ell_{[i]} & \textrm{if $i \leq k$} \\
        0 & \textrm{if $i > k$ and $\ell_{[i]}$ correctly classified} \\
        M & \textrm{if $i > k$ and $\ell_{[i]}$ incorrectly classified}
    \end{dcases}
\end{align*}
where $M$ is a large constant.
We note that the choice of $M$ does not affect training, due that the fact that the it does not contribute to the gradient, but is required for deriving properties of the close-$k$ loss.
For these derivations, it is sufficient that $M$ is at least as large as the largest individual loss that occurs during training.

The close-$k$ aggregate loss can be trained via gradient descent.
The gradient of the close-$k$ loss is:
\begin{align*}
    &\phantom{{}={}}
    \frac{\partial}{\partial \theta} L_{\textrm{close-$k$}}(\{\ell_i\}_{i=1}^{n}) \\
    &=
    \sum_{i=1}^{n}
    \begin{dcases}
        \frac{\partial}{\partial \theta}\ell_{[i]} & \textrm{if $i\leq k$} \\
        0 & \textrm{otherwise}
    \end{dcases}
\end{align*}
Notice that gradient descent on this objective sets the gradients of points not near the boundary to zero, which allows this loss to be robust against the difficult cases in \Cref{sec: motivation}.

We now present several key properties of the close-$k$ loss.
For exposition, we only provide sketches of the proofs here, and defer full proofs to the analysis in \Cref{sec: analysis} and the Appendix.
The first property we are interested in is classification calibration. 
We find that for $k=1$, the close-$1$ loss is classification calibrated under restriction to any set of functions:
\begin{restatable}{theorem}{calib}
    \label{thm: calib}
    $L_{\textrm{close-$1$}}$ is classification calibrated when restricted to any $\Omega$ for sufficiently large $M$.
    That is, if $f_n$ is the minimizer of $L_{\textrm{close-$1$}}$ on $S_n$ in $\Omega$, then $\lim_{n\rightarrow \infty}\mathrm{E}[L_{0-1}(f_n)] = \min_{f\in\Omega}\mathrm{E}[L_{0-1}(f)]$. 
\end{restatable}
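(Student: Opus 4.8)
The plan is to show that minimizing $L_{\textrm{close-}1}$ on a large sample drives the empirical $0$-$1$ error toward $\min_{f \in \Omega} \mathrm{E}[L_{0\text{-}1}(f)]$, and then invoke a standard uniform-convergence argument to transfer this from the empirical to the population risk. The key observation is that, for the parameter value $M$ chosen at least as large as any individual loss encountered, the close-$1$ objective essentially reads off the number of misclassified points: if all $n$ points are classified correctly, the objective equals $\ell_{[1]}$, the loss of the single closest point; if $m \geq 1$ points are misclassified, the objective equals $\ell_{[1]} + (m-1)M$ when the closest point to the boundary is one of the misclassified ones, or $\ell_{[1]} + mM$ otherwise — in either case a quantity that grows like $mM$ and dwarfs the bounded contribution of $\ell_{[1]}$. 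So up to a term bounded by $M$, the close-$1$ objective is $(\text{\# misclassified})\cdot M$ plus lower-order corrections, and its minimizer over $\Omega$ must (for $M$ large) coincide with a minimizer of the empirical $0$-$1$ loss over $\Omega$, breaking ties by pushing the nearest point as far from the boundary as the loss allows.

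First I would make the previous paragraph precise: fix $M$ larger than $\sup$ of the individual loss over the relevant parameter region, and show that if $f$ misclassifies strictly more training points than $f'$, then $L_{\textrm{close-}1}(f) > L_{\textrm{close-}1}(f')$, because the gap of at least $M$ in the ``$i > k$'' block cannot be compensated by the difference of the single $\ell_{[1]}$ terms (each in $[0, M]$). Hence $f_n$, the close-$1$ minimizer, minimizes the empirical $0$-$1$ loss over $\Omega$; write $\widehat{R}_n(f) = \frac{1}{n}\sum_i \mathbf{1}[y_i \neq \sgn f(x_i)]$ and $R(f) = \mathrm{E}[L_{0\text{-}1}(f)]$, so $\widehat{R}_n(f_n) = \min_{f \in \Omega}\widehat{R}_n(f)$.

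Next I would close the gap between empirical and population risk by a two-sided uniform bound: choosing $f^\star \in \argmin_{f\in\Omega} R(f)$, we have
\begin{align*}
R(f_n) - R(f^\star)
&\le \bigl(R(f_n) - \widehat{R}_n(f_n)\bigr) + \bigl(\widehat{R}_n(f_n) - \widehat{R}_n(f^\star)\bigr) + \bigl(\widehat{R}_n(f^\star) - R(f^\star)\bigr) \\
&\le 2\sup_{f\in\Omega}\bigl|\widehat{R}_n(f) - R(f)\bigr|,
\end{align*}
using that the middle term is nonpositive by the previous step. As long as the $0$-$1$ loss class induced by $\Omega$ has finite complexity (finite VC dimension, or more generally vanishing Rademacher complexity — this is exactly the assumption under which $R^\star = \min_{f\in\Omega} R(f)$ is the right benchmark and under which the generalization bounds of \Cref{sec: analysis} apply), the right-hand side tends to $0$ in expectation as $n \to \infty$, giving $\lim_{n\to\infty} \mathrm{E}[R(f_n)] = R^\star$, which is the claim.

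The main obstacle is the tie-breaking / non-uniqueness issue in the first step: $f_n$ minimizes the empirical $0$-$1$ loss, but that minimizer need not be unique, and one must be careful that the $\ell_{[1]}$ term does not cause the close-$1$ minimizer to prefer a parameter on the boundary of the region where the a priori bound on $M$ was taken (e.g. $\|\theta\|\to\infty$ with the loss still bounded). I would handle this either by restricting $\Omega$ to a compact parameter set (as is implicit when a regularizer $\Lambda$ is present), so that the $\ell_{[1]}$ contribution is genuinely bounded and a minimizer exists, or by noting that among empirical-$0$-$1$ minimizers the close-$1$ objective further selects one maximizing the margin of the single nearest point, which only helps and never hurts the $0$-$1$ comparison. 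The rest — the precise computation of $L_{\textrm{close-}1}$ in the correctly- versus incorrectly-classified cases, and the uniform-convergence estimate — is routine and deferred to \Cref{sec: analysis}.
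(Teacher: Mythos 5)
Your proposal is correct and follows essentially the same route as the paper: first show that for $M$ large enough the close-$1$ minimizer must also minimize the empirical 0-1 loss over $\Omega$ (the paper's Lemma~\ref{lemma: optimum}, proved by the same ``a gap of $M$ cannot be offset by the single bounded $\ell_{[1]}$ term'' counting argument), then pass to the population limit. The only difference is that you spell out the two-sided uniform-convergence decomposition and the finite-VC (or vanishing Rademacher complexity) condition it requires, whereas the paper simply asserts $\lim_{n\to\infty}\min_\theta L_{0-1} = R^*$ without stating that assumption; your version is the more careful reading of what ``any $\Omega$'' must implicitly mean.
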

At a high level, this result holds due to the fact that the close-$1$ loss is tightly bounded by the 0-1 loss.
Thus, the minimizer of the close-$1$ loss is also the minimizer of the 0-1 loss.
We discuss the proof in more detail in \Cref{sec: analysis}.

A concern for the usage of close-$1$ loss is the non-convex nature of the loss.
We now show that larger choices of $k$ can be used as a tradeoff between calibration and convexity.
First, while the close-$k$ aggregate loss is not necessarily classification calibrated when restricted, its suboptimality can be tightly bounded.
\begin{restatable}{lemma}{lemmak}
    \label{lemma: k}
    Let
    \begin{align*}
        \hat\theta = \argmin_{\theta} L_{\textrm{close-$k$}}(\{(y_i, f_\theta(x_i))\}_{i=1}^{n}).
    \end{align*}
    Then,
    \begin{align*}
        L_{0-1}(\hat\theta)
        \leq
        \min_\theta L_{0-1}(\{(y_i, f_\theta(x_i))\}_{i=1}^{n}) + k - 1
    \end{align*}
\end{restatable}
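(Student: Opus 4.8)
The plan is to control the $0$-$1$ loss at the close-$k$ minimizer $\hat\theta$ by comparing $L_{\textrm{close-$k$}}(\hat\theta)$ with $L_{\textrm{close-$k$}}(\theta^*)$, where $\theta^*$ achieves $\min_\theta L_{0-1}(\theta)$ on the sample (attained, since the $0$-$1$ loss is integer valued). The leverage comes from the fact that the close-$k$ loss splits cleanly into a \emph{bounded} contribution from the $k$ points nearest the decision boundary plus a contribution of exactly $M$ for every misclassified point among the remaining $n-k$.

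First I would set up notation: write $m = L_{0-1}(\hat\theta)$ and $m^* = L_{0-1}(\theta^*)$, and for each of $\hat\theta$ and $\theta^*$ let $c$ (resp.\ $c^*$) be the number of the $k$ boundary-closest points that are misclassified. Since a point ranked beyond the $k$ closest contributes $M$ exactly when it is misclassified, $L_{\textrm{close-$k$}}(\hat\theta) = \sum_{i=1}^{k}\ell_{[i]} + (m-c)M$ and $L_{\textrm{close-$k$}}(\theta^*) = \sum_{i=1}^{k}\ell^{*}_{[i]} + (m^*-c^*)M$, where $\ell^{*}_{[i]}$ is the $i$-th boundary-closest individual loss at $\theta^*$. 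Using that $T$ separates correct from incorrect classification, a misclassified example has individual loss in $(T, M]$ (the upper bound is where $M$ being at least the largest training loss enters) while a correctly classified one has loss in $[0, T)$; this gives $L_{\textrm{close-$k$}}(\hat\theta) \ge cT + (m-c)M$ and $L_{\textrm{close-$k$}}(\theta^*) \le c^*M + (k-c^*)T + (m^*-c^*)M = m^*M + (k-c^*)T \le m^*M + kT$.

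Then I would invoke optimality, $L_{\textrm{close-$k$}}(\hat\theta) \le L_{\textrm{close-$k$}}(\theta^*)$, and split on whether $c=k$. If $c \le k-1$, the inequality reads $cT + (m-c)M \le m^*M + kT$, i.e.\ $(m-c-m^*)M \le (k-c)T \le kT$; taking $M$ larger than $kT$ and using that $m-c-m^*$ is an integer forces $m-c-m^* \le 0$, hence $m \le m^* + c \le m^* + k - 1$. In the remaining case $c=k$, all $k$ boundary-closest points under $\hat\theta$ are misclassified, so $\sum_{i=1}^{k}\ell_{[i]} > kT$ \emph{strictly}, whence $kT + (m-k)M < L_{\textrm{close-$k$}}(\hat\theta) \le L_{\textrm{close-$k$}}(\theta^*) \le m^*M + kT$; this yields $(m-k)M < m^*M$, so $m - k < m^*$ and again $m \le m^* + k - 1$.

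I expect the main obstacle to be pinning the additive constant down to $k-1$ rather than $k$: the coarse chain of inequalities alone only delivers $m \le m^* + k$, and recovering the last unit depends on the \emph{strict} inequality $\ell > T$ for misclassified points, which is invoked precisely in the degenerate case where every one of the $k$ boundary-closest points at the close-$k$ minimizer is already wrong. A secondary point to watch is the choice of $M$: it must exceed both the largest individual training loss and $kT$ for the integrality-rounding step to be valid, i.e.\ the "sufficiently large $M$" regime already assumed in Theorem~\ref{thm: calib}.
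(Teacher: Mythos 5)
Your proof is correct and follows essentially the same route as the paper's: compare $L_{\textrm{close-}k}$ at the close-$k$ minimizer with its value at the $0$-$1$ optimum $\theta^*$, use that points outside the closest $k$ contribute exactly $0$ or $M$, and let the large $M$ dominate the bounded contribution of the $k$ near points. If anything yours is the more careful version---the paper's bound $L_{\textrm{close-}k}(\theta^*)\le wM$ silently drops the up-to-$kT$ contribution of the $k$ near points, which is precisely what your explicit requirement $M>kT$ absorbs, and your case split on $c=k$ cleanly accounts for the additive constant being $k-1$ rather than $k$.
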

The proof is similar to the proof of \Cref{thm: calib} and is provided in the Appendix.
Additionally, for $k=n$, the close-$n$ loss is exactly the average loss, which is convex as long as the individual losses are convex.

Based on these observations, we see that the selection of $k$ is a tradeoff between being calibrated and being convex.
To take advantage of this observation, we can begin training with large values of $k$, where the loss is convex, followed by decaying $k$ to the desired value.
We describe this learning scheme as \Cref{alg: decay}.
In close decay, in the first third of the epochs, we optimize the model parameters using the standard average loss (corresponding to $k=n$).
For the middle third, we decrease $k$ linearly from $n$ to a base value $k^*$.
Finally we train the last third of epochs for $k^*$ 
Due to the fact that the initial choice of $k=n$ results in a convex loss, this also makes \Cref{alg: decay} robust to the initial parameter choice.
In practice, we treat $k^*$ as a hyperparameter that we search for in powers of 10, i.e. $k^* \in \{10, 10^2, ..., n\}$.
For each gradient step, the examples with losses closest to the threshold are selected, and the gradients for the corresponding individual losses are computed and added.

\begin{algorithm}[htbp]
    \caption{Close-$k$ Aggregate Loss with Decaying $k$}
    \label{alg: decay}
    \begin{algorithmic}[1]
        \State $\theta\gets \textrm{Random Initialization}$
        \For{$i$ from 1 to epochs}
            \If {$i < \textrm{epochs} / 3$}
                \State $k = n$
            \ElsIf {$i < \frac{2}{3}\textrm{epochs}$}
                \State $k = k^* + \textrm{round}((n - k^*)\frac{2\textrm{epochs} - 3i}{\textrm{epochs}}$
            \Else:
                \State $k = k^*$
            \EndIf

            \State Run gradient descent step with $k$
        \EndFor
        \State \textbf{return} $\theta$
    \end{algorithmic}
\end{algorithm}

\begin{table*}[htbp]
    \small
    \caption{Fraction of PMLB datasets with significant improvement ($p\leq0.05$). The entry in row $i$ and column $j$ of the table indicates the fraction of datasets where method $j$ significantly outperformed method $i$. For example, close decay outperforms average top-$k$ (atk) in 29\% of the data in linear logistic regression. Increases in accuracy by close decay compared to existing methods are shown in bold, and decreases in accuracy are shown in italics.}
    \label{table: pmlb}
    \begin{center}
\begin{tabular}{llcccccccccc}
\toprule
 &  & \multicolumn{5}{c}{Logistic} & \multicolumn{5}{c}{Hinge} \\
\cmidrule(lr){3-7}\cmidrule(lr){8-12} 
& & close & close decay & atk & average & top & close & close decay & atk & average & top \\
\midrule
\multirow{5}{*}{Linear}
& close       &      & 0.04 & 0.00 & 0.00 & 0.03 &      & 0.08 & 0.00 & 0.00 & 0.01 \\
& close decay & 0.00 &      & \textit{0.00} & \textit{0.00} & \textit{0.02} & 0.00 &      & \textit{0.00} & \textit{0.00} & \textit{0.01} \\
& atk         & 0.26 & \textbf{0.29} &      & 0.01 & 0.06 & 0.20 & \textbf{0.26} &      & 0.05 & 0.03 \\
& average     & 0.27 & \textbf{0.27} & 0.03 &      & 0.08 & 0.22 & \textbf{0.25} & 0.04 &      & 0.08 \\
& top         & 0.28 & \textbf{0.32} & 0.10 & 0.15 &      & 0.27 & \textbf{0.28} & 0.10 & 0.12 &      \\
\midrule
\multirow{5}{*}{NN}
& close       &      & 0.06 & 0.00 & 0.00 & 0.01 &      & 0.09 & 0.01 & 0.03 & 0.04 \\
& close decay & 0.04 &      & \textit{0.02} & \textit{0.02} & \textit{0.03} & 0.02 &      & \textit{0.01} & \textit{0.00} & \textit{0.02} \\
& atk         & 0.04 & \textbf{0.06} &      & 0.02 & 0.03 & 0.02 & \textbf{0.03} &      & 0.00 & 0.01 \\
& average     & 0.04 & \textbf{0.08} & 0.02 &      & 0.03 & 0.06 & \textbf{0.09} & 0.03 &      & 0.06 \\
& top         & 0.26 & \textbf{0.28} & 0.23 & 0.23 &      & 0.29 & \textbf{0.30} & 0.24 & 0.27 &      \\
\bottomrule
\end{tabular}
\end{center}

\end{table*}

\section{Experiments}
\label{sec: exp}

\subsection{Setup}
We experimentally verify that our method has consistently strong performance on a wide range of datasets.
We run our experiments on all binary classification tasks in the PMLB suite~\cite{olson2017biomed} (94 datasets)
and the OpenML benchmark~\cite{bischl2017arxiv} (54 datasets).
Many datasets from the popular KEEL \cite{alcala2011jmlsc} and UCI ML \cite{asuncion2007uci} repositories are included in the PMLB suite.
These datasets have a wide range of examples and dimensions (see Figure \ref{fig: summary}).
We additionally report results on the datasets used by \citet{fan2017nips}, which proposed the average top-$k$ loss.
In our experiments, we find statistically significant ($p<0.05$) improvement in more than 20\% of the datasets when using linear classifiers.

\begin{figure}[htbp]
    \begin{center}
        \includegraphics[]{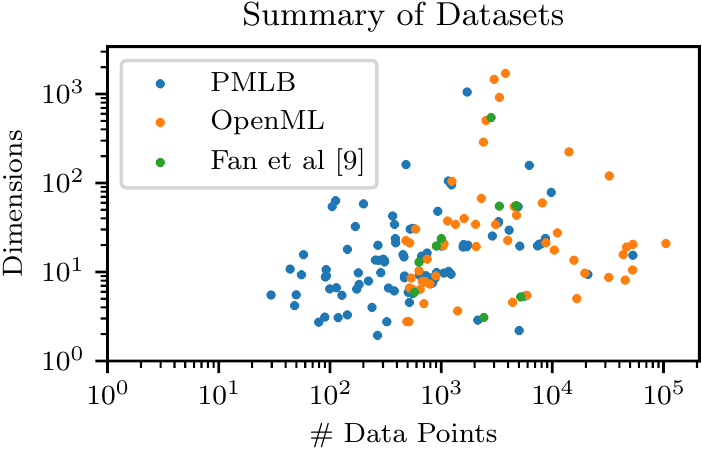}
        \caption{Number of data points and features for the datasets.}
        \label{fig: summary}
    \end{center}
\end{figure}

We run experiments using two different models.
The first model is a linear classifier.
In this case,  aggregating individual logistic losses with averaging corresponds to logistic regression, and aggregating individual hinge losses with averaging corresponds to a linear SVM.
The second model we use is a neural network with two hidden layers.
Each hidden layer has as many features as the original feature space.
We add a residual connection to allow direct access to the original features.

For each dataset, we randomly sample 25 different splits of the data into training, validation, and test sets.
For each split, we use 50\% of examples as the training set, 25\% of examples as the validation set, and 25\% of examples as the test set.
On each split, use the validation set to select hyperparameters.
The regularization factor $\Lambda$ is selected from $\{10^{-5},10^{-4},\ldots,10^{5}\}$.
The number of examples $k$ used by the aggregate loss is selected from $\{10,\ldots,10^{\lfloor\log_{10}n\rfloor},n\}$.
Even the largest dataset only requires 6 different values of $k$ to be searched.
The hyperparameter resulting in the highest accuracy on the validation set is selected.
Note that for both the average top-$k$ and top-$k$ losses\footnote{We use the top-$k$ loss (the $k$-th largest individual loss) as a slight generalization of maximal loss as suggested by \citet{fan2017nips}.}, the $k$ is also a hyperparameter which we select from the same set.

\subsection{Results}

We summarize the results of our experiments on PMLB in Table \ref{table: pmlb} and OpenML in Table \ref{table: openml}.
In these tables, for each pair of losses, we report the fraction of datasets in which one outperforms the other in terms of accuracy at a significance level of $p=0.05$. In the Appendix, we additionally report the fraction of data sets where one algorithm improves over another algorithm by at least 2\% in accuracy; the trend is very similar.
We find that for linear models, \Cref{alg: decay} outperforms average, average top-$k$ and top-$k$ on over 25\% of datasets, regardless of the individual loss used, and only is only outperformed on 2\% of datasets.
We note that on many datasets, the methods have similar performance---this is not surprising due to the fact that logistic regression and linear SVMs are strong baselines for finding linear decision boundaries.
We also find improved performance using neural networks, although the improvements in performance tend to be smaller than with linear decision boundaries. 

\begin{table*}[htbp]
    \small  
    \caption{Fraction of OpenML datasets with significant improvement ($p\leq 0.05$).}
    \label{table: openml}
    \begin{center}
\begin{tabular}{llcccccccccc}
\toprule
 &  & \multicolumn{5}{c}{Logistic} & \multicolumn{5}{c}{Hinge} \\
\cmidrule(lr){3-7}\cmidrule(lr){8-12} 
 && close & close decay & atk & average & top & close & close decay & atk & average & top \\
\midrule
\multirow{5}{*}{Linear}
& close       &      & 0.04 & 0.00 & 0.00 & 0.00 &      & 0.07 & 0.02 & 0.00 & 0.00 \\
& close decay & 0.02 &      & \textit{0.00} & \textit{0.00} & \textit{0.02} & 0.04 &      & \textit{0.00} & \textit{0.00} & \textit{0.00} \\
& atk         & 0.33 & \textbf{0.30} &      & 0.04 & 0.09 & 0.35 & \textbf{0.33} &      & 0.00 & 0.07 \\
& average     & 0.30 & \textbf{0.28} & 0.04 &      & 0.11 & 0.30 & \textbf{0.28} & 0.04 &      & 0.07 \\
& top         & 0.50 & \textbf{0.46} & 0.28 & 0.33 &      & 0.46 & \textbf{0.46} & 0.28 & 0.30 &      \\
\midrule
\multirow{5}{*}{NN}
& close       &      & 0.05 & 0.00 & 0.00 & 0.00 &      & 0.05 & 0.00 & 0.02 & 0.00 \\
& close decay & 0.00 &      & \textit{0.00} & \textit{0.00} & \textit{0.00} & 0.00 &      & \textit{0.00} & \textit{0.02} & \textit{0.00} \\
& atk         & 0.00 & \textbf{0.05} &      & 0.00 & 0.02 & 0.02 & \textbf{0.07} &      & 0.00 & 0.00 \\
& average     & 0.02 & \textbf{0.11} & 0.05 &      & 0.00 & 0.02 & \textbf{0.09} & 0.00 &      & 0.00 \\
& top         & 0.25 & \textbf{0.23} & 0.25 & 0.25 &      & 0.23 & \textbf{0.30} & 0.18 & 0.30 &      \\
\bottomrule
\end{tabular}
\end{center}

\end{table*}

Next, in Table \ref{table: acc}, we run experiments on the 8 datasets used by \citet{fan2017nips}, which proposed average top-$k$. These 8 datasets are presumably settings favorable to the average top-$k$ loss.
We find that even on these datasets, the close decay loss performed statistically equivalent to average top-$k$ on six datasets. On the remaining two, where there is a statistically significant difference in test accuracy, close decay had notable improvements in accuracy.
 For example, the monk dataset resulted in an improvement of more than 6\%.
\begin{table*}
    \small
    \caption{Error rate in datasets from \citet{fan2017nips}. Statistically significant differences are shown in bold.}
    \label{table: acc}
    \begin{center}
\begin{tabular}{lrrrrrrrrrr}
\toprule
 & \multicolumn{5}{c}{Logistic} & \multicolumn{5}{c}{Hinge} \\
\cmidrule(lr){2-6}\cmidrule(lr){7-11} 
 & close & close decay & atk & average & top & close & close decay & atk & average & top \\
\midrule
monk       & \textbf{11.37} & \textbf{11.01} & 18.45 & 19.86 & 18.73 & \textbf{11.36} & \textbf{11.44} & 17.05 & 17.78 & 18.56 \\
phoneme    & \textbf{21.90} & \textbf{21.67} & 23.00 & 23.29 & 23.11 & 21.51 & 21.56 & 22.06 & 23.01 & 23.21 \\
madelon    & 44.29 & 44.79 & 45.98 & 44.34 & 47.42 & 44.31 & 44.22 & 44.31 & 43.26 & 46.78 \\
spambase   &  7.30 &  7.33 &  7.51 &  7.62 &  8.40 &  7.59 &  7.41 &  7.22 &  7.38 &  8.33 \\
titanic    & 22.17 & 21.51 & 22.14 & 22.26 & 22.57 & 21.98 & 22.03 & 22.13 & 22.29 & 22.60 \\
australian & 12.86 & 13.22 & 12.69 & 12.65 & 12.89 & 13.09 & 13.36 & 13.21 & 12.65 & 13.05 \\
splice     & 15.90 & 16.37 & 16.24 & 15.83 & 16.22 & 16.41 & 16.42 & 16.16 & 15.89 & 16.12 \\
german     & 24.48 & 24.33 & 23.93 & 23.98 & 26.32 & 24.99 & 24.15 & 24.67 & 24.45 & 25.62 \\
\bottomrule
\end{tabular}
\end{center}

\end{table*}

\subsection{Analysis}

To help better understand our improvement, we also directly compare against average loss in \Cref{fig: perf}.
In both cases, we see that many datasets have similar performance using either of the losses---this is expected, due to the fact that average loss is a reasonable baseline method. For a substantial number of datasets, using the simple close decay algorithm achieves good improvements.

\begin{figure}[htbp]
    \begin{center}
        \begin{subfigure}[b]{0.49\columnwidth}
            \begin{center}
                 \includegraphics[scale=1.0]{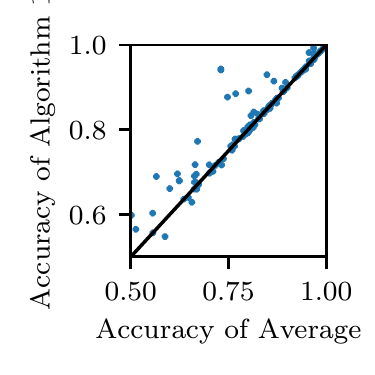}
                 \caption{Logistic}
            \end{center}
        \end{subfigure}
        \begin{subfigure}[b]{0.49\columnwidth}
            \begin{center}
                 \includegraphics[scale=1.0]{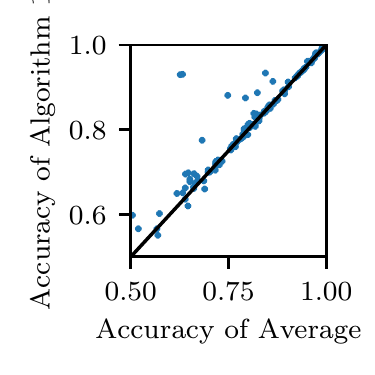}
                 \caption{Hinge}
            \end{center}
        \end{subfigure}
    \end{center}
    \caption{Comparison of performance of \Cref{alg: decay} and average loss using logistic and hinge loss. Each point is one dataset from PMLB and OpenML.}
    \label{fig: perf}
\end{figure}

Finally, we present a summary of the selected $k^*$ across datasets in \Cref{fig: k}.
To allow the value to be more easily interpreted, we show the selected value of $k$ normalized by the number of examples in the dataset.
In \Cref{fig: k}(a), we find that the selected $k^*$ covers a wide range across datasets, and in \Cref{fig: k}(a), we find that datasets where the selected $k^*$ is small are also the datasets where we see substantial improvements in test accuracy compared to average loss. This is  reasonable since if $k^*$ is close to $n$, then close decay is essentially using the average loss.

\begin{figure}[htbp]
    \begin{center}
        \begin{subfigure}[b]{0.49\columnwidth}
            \begin{center}
                \includegraphics[scale=1.0]{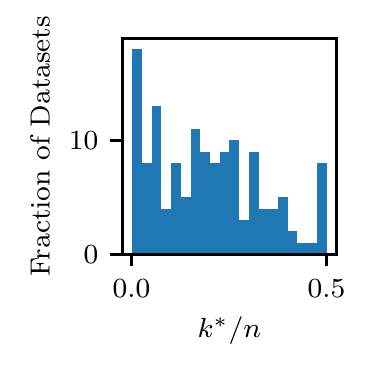}
                \caption{}
            \end{center}
        \end{subfigure}
        \begin{subfigure}[b]{0.49\columnwidth}
            \begin{center}
                \includegraphics[scale=1.0]{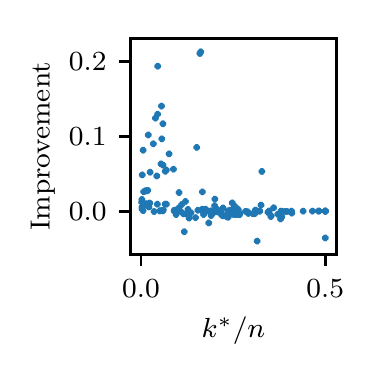}
                \caption{}
                \label{fig: improvement}
            \end{center}
        \end{subfigure}
        \caption{(a) Selected values of $k^*$. (b) Improvement on test accuracy vs. selected $k^*$. Improvement is measured compared to the accuracy of average loss.}
        \label{fig: k}
    \end{center}
\end{figure}

\subsection{Simulations}

\begin{figure*}[htbp]
    \begin{center}
        \begin{subfigure}[b]{0.26\textwidth}
            \begin{center}
                \includegraphics[]{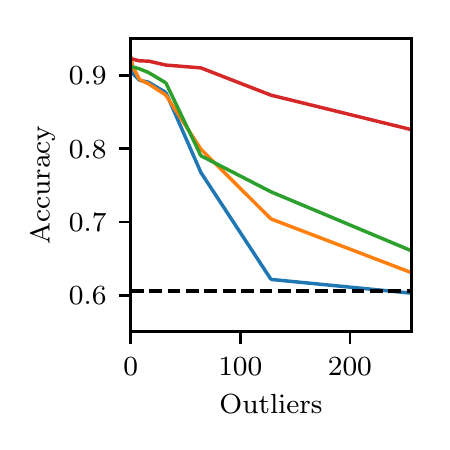}
            \end{center}
        \end{subfigure}
        \begin{subfigure}[b]{0.26\textwidth}
            \begin{center}
                \includegraphics[]{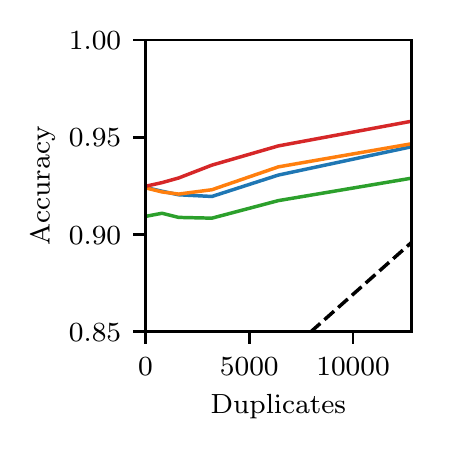}
            \end{center}
        \end{subfigure}
        \begin{subfigure}[b]{0.26\textwidth}
            \begin{center}
                \includegraphics[]{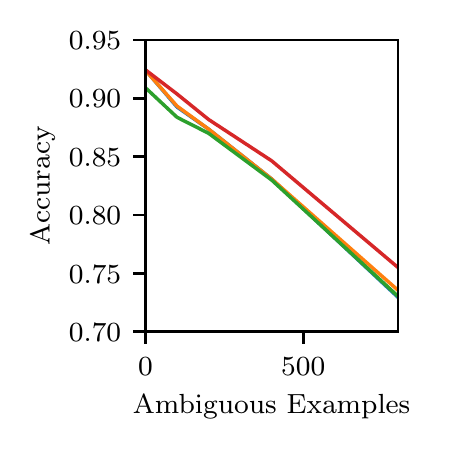}
            \end{center}
        \end{subfigure}
        \begin{subfigure}[b]{0.20\textwidth}
            \begin{center}
                \raisebox{1.5cm} {
                    \includegraphics[]{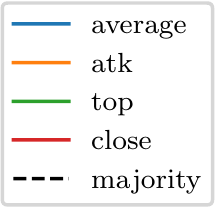}
                }
            \end{center}
        \end{subfigure}
        \caption{Simulated variants of the \texttt{spambase} dataset with  outliers, class imbalance, and ambiguous examples. The accuracy resulting from predicting the majority class on all examples is shown as a dotted line.}
        \label{fig: simulation}
    \end{center}
\end{figure*}

We now present several simulations in \Cref{fig: simulation} to further investigate the performance of close decay. 
These simulations allow us to avoid confounding factors, so that the source of the improvements are more clear.
In these simulations, we use the \texttt{spambase} dataset, which is the easiest example in \Cref{table: acc} (all methods average under 9\% error).
This allows us to increase the difficulty of the dataset to explore how the different methods respond to various challenges.
The spambase dataset consists of 4601 examples with 57 features, where 1813 of the examples are positive (39\% of the dataset).

We simulate three settings: outliers, class imbalance, and ambiguous examples.
To simulate outliers, we sample the class proportional to the original dataset.
We then sample one example with a matching label with features $x_1$ and one example with the opposite label with features $x_2$.
The outlier is then given $10x_2 - 9x_1$ as features (ie. the outlier is placed far from the decision boundary in the wrong direction).
To simulate class imbalance, we take the original dataset and randomly duplicate negative examples, which are already a slight majority class (61\%).
To simulate ambiguous examples, we randomly sample negative examples and create a copy with a positive label, resulting in a point that is impossible to always classify correctly.

First, in the case of outliers, we find that the close-$k$ loss is fairly robust, even when more than 5\% of the dataset is composed of outliers.
In contrast, all other methods have dropped more than 20 percentile points in accuracy, resulting in an accuracy that is only marginally better than predicting the majority class for all examples.

Next, in the case of class imbalance, we find that the close-$k$ loss results in improved performance compared to other aggregate losses as the class imbalance grows.
As the class imbalance grows, all methods improve in accuracy due to fact that predicting the majority class becomes increasingly accurate.

Finally, as the number of ambiguous examples increases, the close-$k$ loss maintains a higher accuracy than the other aggregate losses.
The results from this experiment closely resembles our example in \Cref{fig: synth}(d), where all methods other than the close-$k$ loss are encouraged to attempt to decrease the loss on the ambiguous examples despite the fact that it impossible to classify those examples correctly.

\section{Analysis}
\label{sec: analysis}

In this section, we provide theoretical guarantees for our method.
First, we show that the close-$k$ loss is closely bounded by the 0-1 loss, which provides intuition for some of the later results.
Next, we provide our proof of \Cref{thm: calib}.
Finally, we provide generalization guarantees for the close-$k$ loss.

\paragraph{Bounded by 0-1 Loss}
First, we show that the close-$k$ loss is closely bounded by the 0-1 loss.
Because of this property, even though the close-$k$ loss is not classification calibrated under restriction for $k > 1$, the close-$k$ loss still gives strong performance.
\begin{restatable}{lemma}{bound}\label{lem:bound}
    \label{lemma: bound}
    For all $S$ and $f$,
    \begin{align*}
        &\phantom{{}={}} L_{0-1}(\{(y_i, f_\theta(x_i))\}_{i=1}^{n}) - k
        \\&<
        \frac{1}{M} L_{\textrm{close-$1$}}(\{(y_i, f_\theta(x_i))\}_{i=1}^{n})
        \\&<
        L_{0-1}(\{(y_i, f_\theta(x_i))\}_{i=1}^{n}) + k
    \end{align*}
\end{restatable}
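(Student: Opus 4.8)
Fix $f_\theta$ and write $m := L_{0-1}(\{(y_i,f_\theta(x_i))\}_{i=1}^n)$ for the \emph{number} of misclassified training points (the counting convention for $L_{0-1}$ used in this section, as in \Cref{lemma: k}); the bound is to be read for the close-$k$ loss, since it involves $k$. Order the individual losses as $\ell_{[1]},\dots,\ell_{[n]}$ by increasing distance $|\ell_{[i]}-T|$ to the threshold $T$, and let $a$ be the number of misclassified points among the $k$ closest, so $0\le a\le\min(k,m)$ and exactly $m-a$ of the points in positions $k+1,\dots,n$ are misclassified. Reading off the definition of the close-$k$ loss gives the exact identity
\[
L_{\textrm{close-$k$}}(\{\ell_i\}_{i=1}^n)\;=\;\sum_{i=1}^{k}\ell_{[i]}\;+\;M\,(m-a),
\]
so the lemma reduces to bounding the finite sum $\sum_{i=1}^k\ell_{[i]}$. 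For this I use only four properties of the individual loss (all true for logistic, $T=\log 2$, and hinge, $T=1$): it is nonnegative; a correctly classified point has loss $<T$; a misclassified point has loss $\ge T>0$; and every loss occurring satisfies $\ell\le M$, with $M>T$ (the ``sufficiently large $M$'' hypothesis).

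For the lower bound, discard the nonnegative sum: $L_{\textrm{close-$k$}}\ge M(m-a)\ge M(m-k)$ since $a\le k$. Strictness is by cases on $a$: if $a<k$ then $M(m-a)\ge M(m-k)+M>M(m-k)$ already; if $a=k$ then all $k$ nearest points lie on the wrong side, so $\sum_{i=1}^k\ell_{[i]}\ge kT>0$ and the total still exceeds $M(m-k)$. For the upper bound, split the $k$ nearest points into the $a$ misclassified (loss $\le M$) and the $k-a$ correct (loss $<T$):
\[
L_{\textrm{close-$k$}}\;\le\;aM+(k-a)T+M(m-a)\;=\;Mm+(k-a)T.
\]
Since $k\ge1$ and $T<M$: when $a<k$ we have $(k-a)T<(k-a)M\le kM$, and when $a=k$ the extra term vanishes; in either case the right-hand side is strictly below $Mm+kM=M(m+k)$. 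Dividing both displays by $M$ yields the two stated inequalities.

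I do not expect a genuine obstacle here — the argument is the decomposition above together with a short case split on how many of the $k$ points nearest the boundary are misclassified. The two places that require care are (i) the convention for a training point lying exactly on the decision boundary ($\ell_i=T$), which must be grouped with the misclassified points so that ``correct $\Rightarrow\ell<T$'' and ``misclassified $\Rightarrow\ell\ge T$'' hold exactly as used above; and (ii) pinning down the hypothesis on $M$: besides $M\ge\max_i\ell_i$, which the paper already assumes, the upper bound needs $M>T$, matching the ``sufficiently large $M$'' phrasing of \Cref{thm: calib}.
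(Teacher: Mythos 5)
Your proof is correct and is essentially the paper's argument written out in full: the paper's two-line proof simply observes that $\tfrac{1}{M}L_{\textrm{close-$k$}}$ and the 0-1 count agree on all terms with index $i>k$ and can differ only on the $k$ closest points, each by strictly less than one, which is exactly what your identity $L_{\textrm{close-$k$}}=\sum_{i=1}^{k}\ell_{[i]}+M(m-a)$ and the case split on $a$ establish. Your version is more careful than the paper's sketch about where the strict inequalities come from and about the hypotheses on $M$, but the underlying decomposition is the same.
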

See Appendix~\ref{sec:proofs} for the short proof.
As $k$ is decreased, this bound becomes tighter, which results in the close-$1$ loss being classification calibrated under restriction to any set.

\paragraph{Classification Calibration}

We now present the proof of \Cref{thm: calib}, which states that the close-1 loss is classification calibrated when restricted to any set of functions.
This proof consists of two main observations:
\begin{itemize}
    \item The minimizer of the close-$1$ loss matches the optimal 0-1 loss on any training set.
    \item It follows that in infinite population limit, we obtain the optimal 0-1 loss in $\Omega$.
\end{itemize}

We provide the first step of the proof as a lemma.
Informally, this lemma states that the minimizer of the close-$1$ aggregate loss achieves the true 0-1 optimum in $\Omega$.
\begin{lemma}
    \label{lemma: optimum}
    Let
    \begin{align*}
        \hat\theta = \argmin_{\theta} L_{\textrm{close-1}}(\{(y_i, f_\theta(x_i))\}_{i=1}^{n}).
    \end{align*}
    Then,
    \begin{align*}
        \min_\theta L_{0-1}(\{(y_i, f_\theta(x_i))\}_{i=1}^{n}) = L_{0-1}(\hat\theta)
    \end{align*}
\end{lemma}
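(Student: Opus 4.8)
The plan is to show that, on any fixed training set, the value of $L_{\textrm{close-$1$}}$ pins down the $0$-$1$ loss $L_{0-1}$ up to a known, monotone correspondence, so that any minimizer of $L_{\textrm{close-$1$}}$ is automatically a minimizer of $L_{0-1}$. Throughout, write $m(\theta) = L_{0-1}(\{(y_i, f_\theta(x_i))\}_{i=1}^{n})$ for the number of misclassified points at parameter $\theta$, and let $T$ be the per-example classification threshold (so $\ell < T$ means correctly classified and $\ell \ge T$ means not). The first step is to evaluate $L_{\textrm{close-$1$}}(\theta)$ in terms of $m(\theta)$ by splitting on whether the boundary-closest point --- the one realizing $\ell_{[1]}$ --- is itself correctly classified. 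If it is, then all $m(\theta)$ misclassified points receive index $i \ge 2$ in the close ordering, so $L_{\textrm{close-$1$}}(\theta) = \ell_{[1]} + M\,m(\theta)$ with $0 \le \ell_{[1]} < T$; if it is not, exactly $m(\theta) - 1$ of them have index $i \ge 2$, so $L_{\textrm{close-$1$}}(\theta) = \ell_{[1]} + M\,(m(\theta) - 1)$ with $T \le \ell_{[1]} \le M$. Taking $M \ge \max(T, \text{largest individual loss during training})$, both cases collapse into the single two-sided bound $M(m(\theta) - 1) + T \le L_{\textrm{close-$1$}}(\theta) < M\,m(\theta) + T$ --- a sharpening of Lemma~\ref{lemma: bound} at $k=1$.

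The second step is a short comparison argument. Since $M > 0$, the intervals $[M(m-1)+T,\, Mm+T)$ indexed by $m \in \{0,1,2,\dots\}$ are disjoint and increasing, so $m(\theta_1) < m(\theta_2)$ forces $L_{\textrm{close-$1$}}(\theta_1) < L_{\textrm{close-$1$}}(\theta_2)$. Let $m^\ast = \min_\theta L_{0-1}(\theta)$, attained at some $\theta^\ast$, and let $\hat\theta$ minimize $L_{\textrm{close-$1$}}$. Then $L_{0-1}(\hat\theta) \ge m^\ast$ by definition of $m^\ast$. If this were strict, integrality of $L_{0-1}$ gives $L_{0-1}(\hat\theta) \ge m^\ast + 1$, hence $L_{\textrm{close-$1$}}(\hat\theta) \ge M m^\ast + T > L_{\textrm{close-$1$}}(\theta^\ast)$, contradicting the minimality of $\hat\theta$. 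Therefore $L_{0-1}(\hat\theta) = m^\ast = \min_\theta L_{0-1}(\theta)$, which is exactly the assertion.

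I expect the one genuinely nonobvious point to be the off-by-one bookkeeping in the case split: the point realizing $\ell_{[1]}$ is charged its full surrogate loss $\ell_{[1]}$ rather than $0$ or $M$, so whether that particular point is correct or not shifts the $M$-count --- and hence which interval $L_{\textrm{close-$1$}}(\theta)$ lands in --- by one. The remaining care is simply verifying that one global constant $M$ (any $M$ at least $T$ and at least the largest individual loss encountered) makes the two cases' estimates line up into the clean disjoint-intervals picture; after that the conclusion is immediate. Existence of the minimizers $\hat\theta$ and $\theta^\ast$ is taken for granted as elsewhere in the paper; if desired, the same argument goes through with infima and an $\varepsilon$-minimizer of $L_{\textrm{close-$1$}}$ in place of $\hat\theta$.
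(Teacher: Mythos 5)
Your proof is correct and follows essentially the same route as the paper's: case-split on whether the boundary-closest point is correctly classified, express $L_{\textrm{close-$1$}}$ in terms of the misclassification count times $M$, and conclude that a smaller close-$1$ loss forces at least as many correct classifications. Your two-sided interval bound $M(m(\theta)-1)+T \le L_{\textrm{close-$1$}}(\theta) < M\,m(\theta)+T$ is in fact slightly more careful than the paper's statement that $L_{\textrm{close-$1$}}(\theta^*) < wM$ (which, when the closest point is correctly classified, should read $< wM + T$), and your disjoint-intervals packaging cleanly absorbs that off-by-$T$ issue.
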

\begin{proof}
    The optimal parameter choice is $\theta^* = \argmin_\theta L_{0-1}(\{(y_i, f_\theta(x_i))\}_{i=1}^{n})$, and corresponding 0-1 loss, which is the number of points classified incorrectly, is $w = \min_\theta L_{0-1}(\{(y_i, f_\theta(x_i))\}_{i=1}^{n})$.
    Then, $c = n - w$ is the number of points classified correctly by the optimal parameter choice.
    If the point closest to the decision boundary is classified correctly, then $L_{\textrm{close-1}}(\{(y_i, f_{\theta^*}(x_i))\}_{i=1}^{n})$
    is a sum consisting of $c-1$ 0's, a term less than $T$, and $w$ $M$'s.
    If the point closest to the decision boundary is classified incorrectly, then $L_{\textrm{close-1}}(\{(y_i, f_{\theta^*}(x_i))\}_{i=1}^{n})$
    is a sum consisting of $c$ 0's, a term between than $T$ and $M$, and $w - 1$ $M$'s.
    In both cases, $L_{\textrm{close-1}}(\{(y_i, f_{\theta^*}(x_i))\}_{i=1}^{n}) < wM$.

    Suppose that there is a $\hat\theta$ resulting in a smaller
     $L_{\textrm{close-1}}(\{(y_i, f_{\hat\theta}(x_i))\}_{i=1}^{n})$.
     Note that this is composed of a sum of $n$ non-negative elements, of which at most one element is not equal to either 0 or $M$.
     Then, for
     $L_{\textrm{close-1}}(\{(y_i, f_{\hat\theta}(x_i))\}_{i=1}^{n})$
     to be strictly less than $wM$, at least $c$ terms must be zero, so $\hat\theta$ would also get a 0-1 error of $w$.
\end{proof}

With this intermediate claim, we can now derive the proof of \Cref{thm: calib}.
\begin{proof}
    Using \Cref{lemma: optimum}, we have that for any training set, the minimizer of the close-$k$ loss also minimizes the 0-1 loss.
    In the infinite population limit,
    \begin{align*}
        \lim_{n\rightarrow\infty} \min_\theta L_{0-1}(\{(y_i, f_\theta(x_i))\}_{i=1}^{n}) = R^*.
    \end{align*}
    Using this observation with \Cref{lemma: optimum} gives us the desired claim.
\end{proof}

\paragraph{Generalization Guarantees}
Finally, we show that the close-$k$ aggregate loss is able to generalize well.
In particular, we find that the true accuracy of a learned model will be close to the training loss with high probability.
Additionally, as the number of training points $n$ increases, the generalization gap decays to 0 for $k=1$, where the loss is classification calibrated under restriction.
\begin{restatable}{theorem}{generalization}
    For any $\theta$, with probability at least $1 - \delta$,
    \begin{gather*}
        \tiny
        L_{0-1}
        \leq
        L_{\textrm{close-$k$}}(\{(y_i, f_{\theta}(x_i))\}_{i=1}^{n}) + k - 1 +\\ 
        \tilde O\Big(\sqrt{\left(L_{\textrm{close-$k$}}(\{(y_i, f_{\theta}(x_i))\}_{i=1}^{n}) + k - 1\right)\frac{\textrm{VC} - \log(\delta)}{n}}\\ + \frac{\textrm{VC} - \log(\delta)}{n}\Big).
    \end{gather*}
    where $VC$ is the VC dimension of the class of functions $\Omega$. 
\end{restatable}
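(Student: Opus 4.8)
The plan is to reduce the statement to two independent ingredients and then chain them: a deterministic inequality relating the empirical 0-1 error to the empirical close-$k$ loss, and a uniform, Bernstein-type (``optimistic-rate'') VC generalization bound for the 0-1 loss. A remark that shapes the whole argument: one should not attempt uniform convergence for the close-$k$ loss directly, because the close-$k$ value of a training point depends on which \emph{other} sampled points are among the $k$ nearest to the boundary, so it is not an i.i.d.\ average of a fixed function of individual examples and the usual symmetrization/VC apparatus does not apply to it. The 0-1 loss is pointwise, and \Cref{lemma: bound}/\Cref{lemma: k} already bind the two together tightly, so the generalization step is done entirely in terms of the 0-1 loss while the close-$k$ loss enters only at the level of the fixed training sample.

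First I would record the deterministic step, which is \Cref{lemma: k} stated for an arbitrary $\theta$ (not just the close-$k$ minimizer): the number of misclassified training points satisfies
\[
  L_{0-1}\big(\{(y_i,f_\theta(x_i))\}_{i=1}^{n}\big)
  \;\le\;
  L_{\textrm{close-}k}\big(\{(y_i,f_\theta(x_i))\}_{i=1}^{n}\big) + k - 1 .
\]
The reason is the same bookkeeping as in \Cref{lemma: bound}: at most $k$ of the $k$ points nearest the boundary can be misclassified, and if all $k$ are, each of them contributes an individual loss above the threshold $T$, which upgrades $k$ to $k-1$; meanwhile every misclassified point that is \emph{not} among the $k$ nearest contributes $M$ to $L_{\textrm{close-}k}$, so at most $\tfrac1M L_{\textrm{close-}k}\le L_{\textrm{close-}k}$ of them exist --- this is exactly where it is used that $M$ exceeds every individual loss seen in training.

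Next I would invoke the relative-deviation VC inequality for the 0-1 loss --- Vapnik's optimistic-rate bound, obtained from Bernstein's inequality (a Bernoulli's variance is at most its mean) plus a union bound over the restriction of $\Omega$ to the sample, equivalently a local-Rademacher argument: with probability at least $1-\delta$, simultaneously for all $f\in\Omega$,
\[
  R_{0-1}(f)
  \;\le\;
  \widehat R_{0-1}(f)
  + \tilde O\!\left(\sqrt{\widehat R_{0-1}(f)\,\frac{\mathrm{VC}-\log\delta}{n}}
  \;+\; \frac{\mathrm{VC}-\log\delta}{n}\right),
\]
where $\widehat R_{0-1}$ and $R_{0-1}$ denote the empirical and population 0-1 risks and $\tilde O$ absorbs the $\log n$ from the growth function. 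Since $\widehat R_{0-1}(f)$ is at most the misclassification count, the deterministic step lets me replace $\widehat R_{0-1}(f_\theta)$ by $L_{\textrm{close-}k}(f_\theta)+k-1$ throughout --- monotonicity of $\sqrt{\,\cdot\,}$ handles the square-root term --- which gives the claimed inequality, and uniformity over $\Omega$ is what makes it valid for the data-dependent learned $\theta$. For $k=1$ the additive $k-1$ vanishes and $L_{\textrm{close-}1}$ is within $1$ of the error count, so the gap tends to $0$ as $n\to\infty$, consistently with \Cref{thm: calib}.

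I expect the main obstacle to be the generalization step, not the combinatorics: getting the ``fast when the training loss is small'' shape $\sqrt{\mathrm{loss}\cdot\mathrm{VC}/n}+\mathrm{VC}/n$ rather than the crude $\sqrt{\mathrm{VC}/n}$ rate requires the relative-deviation/Bernstein refinement of the elementary VC bound and careful tracking of which logarithmic factors may legitimately be folded into $\tilde O$, together with a uniform statement strong enough to be applied at the minimizer of $L_{\textrm{close-}k}$.
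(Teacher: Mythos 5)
Your proof takes essentially the same route as the paper: a deterministic combinatorial bound $L_{0\text{-}1}(\{(y_i,f_\theta(x_i))\}_{i=1}^{n}) \le L_{\textrm{close-}k}(\{(y_i,f_\theta(x_i))\}_{i=1}^{n}) + k - 1$ on the training sample, chained with a uniform relative-deviation (Bernstein-type) VC bound for the 0-1 loss, which is exactly what the paper invokes as Corollary 5.2 of \citet{boucheron2005edp}. Your write-up even states the deterministic inequality in the direction actually needed for the chaining, whereas the paper's appendix writes it reversed (an apparent typo).
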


The proof of this result relies on the bound from \Cref{lemma: k} along using results from learning theory to bound the gap between the training 0-1 loss and the test 0-1 loss with high probability \cite{boucheron2005edp}.
The full proof is provided in the appendix.

\section{Related Work}

Given the importance of the 0-1 loss in supervised classification tasks, there have been many approaches to improving the test accuracy in this setting.
Most closely related to our work are studies on other aggregate losses, such as the maximal and average top-$k$ losses \cite{shalevshwartz2016icml,fan2017nips}.
Other lines of work have focused on reweighing individual examples based on their difficulty \cite{lin2017iccv,ren2018icml}.
Similarly, robust estimation techniques identify outliers and downweigh their effect \cite{yang2010nips,huber1964aos}.
While these techniques can handle outliers well, they are not capable of handling other common classification settings that we explore.
Other studies on losses focus on particular models, such as SVMs \cite{huang2014jmlr,wu2007jasa,shen2003jasa} and boosting \cite{freund1997jcss}, which use specialized training schemes.
Finally, several papers study the case where the training set has noisy class labels, such as when training examples are either positive or unlabeled \cite{elkan2008kdd} or when training labels include random errors \cite{natarajan2013nips}.

Other lines of work study alternative loss functions in settings where the 0-1 loss is not an appropriate metric of performance.
For example, in information retrieval, the area under the ROC curve is commonly used \cite{kotlowski2011icml,eban2017aistats,natole2018icml}.
In other cases, false positives and false negatives have different costs \cite{domingos1999kdd}.

\section{Conclusion}

In this paper, we study several existing aggregate losses.
We find that even in simple datasets, existing aggregate losses result in suboptimal behavior, even on the training set and in the infinite population setting.
To explain this behavior, we propose classification calibration under restriction as a more precise definition.
As a result, we propose the close-$k$ aggregate loss, which allows a tradeoff between classification calibration under restriction and convexity.
We provide generalization bounds for the close-$k$ loss, and show that the close-$k$ loss does not result in unexpected overfitting.
We experimentally verify that the close-$k$ loss significantly improves performance on the PMLB and OpenML benchmark suites, and explain our improvements using variants of a real dataset.

An interesting problem that remains is generalizing the concept of classification calibration under restriction to the multiclass setting.
While it is straightforward to reduce multiclass classification problems to multiple binary classification problems using learning reductions, it is unclear whether the desired calibration properties would still hold after applying learning reduction techniques.
It may be interesting to apply similar concepts of focusing on examples near the decision boundary to multiclass problems to improve performance.

\paragraph{Code Availability}
An implementation of our method and code to reproduce the figures and tables in this paper are available on Github (\href{https://github.com/bryan-he/closek/}{https://github.com/bryan-he/closek/}).

\paragraph{Acknowledgements}
We thank Allen Nie, Amirali Aghazadeh, Jaime Roquero Gimenez, and Mika Sarkin Jain for helpful suggestions and feedback during development of this method.
The authors are supported by a Chan-Zuckerberg Investigator grant, National Science Foundation grant CRII 1657155, and National Science Foundation grant DGE 114747.

\clearpage
\bibliographystyle{plainnat}
\bibliography{bibliography}

\clearpage
\appendix
\onecolumn
\section{Additional Experimental Results}

In this section, we provide additional results from our experiments on the PMLB and OpenML benchmarks.

\begin{table*}[htbp]
    \small
    \caption{Fraction of PMLB datasets with improvement of at least 2 percentage points.}
    \label{table: pmlb2}
    \begin{center}
\begin{tabular}{llcccccccccc}
\toprule
 &  & \multicolumn{5}{c}{Logistic} & \multicolumn{5}{c}{Hinge} \\
\cmidrule(lr){3-7}\cmidrule(lr){8-12} 
& & close & close decay & atk & average & top & close & close decay & atk & average & top \\
\midrule
\multirow{5}{*}{Linear}
& close       &      & 0.04 & 0.01 & 0.01 & 0.01 &      & 0.09 & 0.01 & 0.02 & 0.02 \\
& close decay & 0.02 &      & \textit{0.02} & \textit{0.02} & \textit{0.01} & 0.03 &      & \textit{0.01} & \textit{0.02} & \textit{0.02} \\
& atk         & 0.23 & \textbf{0.25} &      & 0.04 & 0.06 & 0.24 & \textbf{0.24} &      & 0.10 & 0.03 \\
& average     & 0.18 & \textbf{0.23} & 0.03 &      & 0.06 & 0.17 & \textbf{0.18} & 0.04 &      & 0.06 \\
& top         & 0.35 & \textbf{0.34} & 0.14 & 0.20 &      & 0.29 & \textbf{0.33} & 0.10 & 0.15 &      \\
\midrule
\multirow{5}{*}{NN}
& close       &      & 0.09 & 0.04 & 0.04 & 0.00 &      & 0.08 & 0.07 & 0.04 & 0.03 \\
& close decay & 0.06 &      & \textit{0.03} & \textit{0.02} & \textit{0.03} & 0.03 &      & \textit{0.02} & \textit{0.03} & \textit{0.02} \\
& atk         & 0.08 & \textbf{0.11} &      & 0.07 & 0.04 & 0.08 & \textbf{0.09} &      & 0.04 & 0.03 \\
& average     & 0.10 & \textbf{0.11} & 0.02 &      & 0.06 & 0.08 & \textbf{0.09} & 0.08 &      & 0.07 \\
& top         & 0.29 & \textbf{0.29} & 0.23 & 0.28 &      & 0.29 & \textbf{0.32} & 0.27 & 0.32 &      \\
\bottomrule
\end{tabular}
\end{center}

\end{table*}

\begin{table*}[htbp]
    \small
    \caption{Fraction of OpenML datasets with improvement of at least 2 percentage points.}
    \label{table: openml2}
    \begin{center}
\begin{tabular}{llcccccccccc}
\toprule
 &  & \multicolumn{5}{c}{Logistic} & \multicolumn{5}{c}{Hinge} \\
\cmidrule(lr){3-7}\cmidrule(lr){8-12} 
& & close & close decay & atk & average & top & close & close decay & atk & average & top \\
\midrule
\multirow{5}{*}{Linear}
& close       &      & 0.04 & 0.00 & 0.00 & 0.00 &      & 0.04 & 0.00 & 0.00 & 0.00 \\
& close decay & 0.00 &      & \textit{0.00} & \textit{0.00} & \textit{0.00} & 0.00 &      & \textit{0.00} & \textit{0.00} & \textit{0.00} \\
& atk         & 0.19 & \textbf{0.19} &      & 0.00 & 0.12 & 0.15 & \textbf{0.19} &      & 0.00 & 0.08 \\
& average     & 0.19 & \textbf{0.19} & 0.00 &      & 0.15 & 0.15 & \textbf{0.19} & 0.00 &      & 0.04 \\
& top         & 0.19 & \textbf{0.23} & 0.00 & 0.00 &      & 0.23 & \textbf{0.27} & 0.04 & 0.04 &      \\
\midrule
\multirow{5}{*}{NN}
& close       &      & 0.02 & 0.00 & 0.00 & 0.02 &      & 0.11 & 0.05 & 0.07 & 0.02 \\
& close decay & 0.00 &      & \textit{0.00} & \textit{0.00} & \textit{0.00} & 0.02 &      & \textit{0.00} & \textit{0.02} & \textit{0.00} \\
& atk         & 0.05 & \textbf{0.05} &      & 0.02 & 0.02 & 0.05 & \textbf{0.05} &      & 0.02 & 0.00 \\
& average     & 0.02 & \textbf{0.02} & 0.00 &      & 0.00 & 0.07 & \textbf{0.07} & 0.02 &      & 0.02 \\
& top         & 0.34 & \textbf{0.34} & 0.32 & 0.34 &      & 0.34 & \textbf{0.34} & 0.30 & 0.34 &      \\
\bottomrule
\end{tabular}
\end{center}

\end{table*}

\section{Proofs}\label{sec:proofs}

\paragraph{Proof of Lemma~\ref{lem:bound}}.
\begin{proof}
    Notice that at most $k$ terms in the two summations differ.
    The terms that differs must be between 0 and 1, so the total difference must be between $-k$ and $k$.
\end{proof}

\lemmak*
\begin{proof}
    The optimal parameter choice is $\theta^* = \argmin_\theta L_{0-1}(\{(y_i, f_\theta(x_i))\}_{i=1}^{n})$, and
    the number of points classified incorrectly is $w = \min_\theta L_{0-1}(\{(y_i, f_\theta(x_i))\}_{i=1}^{n})$.
    Then, $c = n - w$ is the number of points classified correctly by the optimal parameter choice.

    $L_{\textrm{close-$k$}}(\{(y_i, f_{\theta^*}(x_i))\}_{i=1}^{n})$ is a sum consisting of at least $c - k$ 0's, $k$ terms between $0$ and $M$, and at most $n - c - k$ $M$'s.
    Thus, $L_{\textrm{close-$k$}}(\{(y_i, f_{\theta^*}(x_i))\}_{i=1}^{n}) \leq (n - c)M = wM$.

    Suppose that there is a $\hat\theta$ resulting in a smaller
     $L_{\textrm{close-1}}(\{(y_i, f_{\hat\theta}(x_i))\}_{i=1}^{n})$.
     Note that this is composed of a sum of $n$ non-negative elements that are no more than $M$, of which at most $k$ elements is not equal to either 0 or $M$.
     Then, for
     $L_{\textrm{close-1}}(\{(y_i, f_{\hat\theta}(x_i))\}_{i=1}^{n})$
     to be strictly less than $wM$, at least $c-k+1$ terms must be zero, so $\hat\theta$ would also get a 0-1 error of $w$.
\end{proof}

\generalization*
\begin{proof}
    From Lemma 3, we have that
    $L_{\textrm{close-$k$}}(\{(y_i, f_\theta(x_i))\}_{i=1}^{n}) + k - 1 \leq L_{0-1}(\{(y_i, f_\theta(x_i))\}_{i=1}^{n})$.
    In addition, we have that
    \begin{gather*}
        L_{0-1} \leq
        L_{0-1}(\{(y_i, f_{\theta^*}(x_i))\}_{i=1}^{n}) + k - 1 + \\
        \tilde O\Big(\sqrt{\left(L_{0-1}(\{(y_i, f_{\theta^*}(x_i))\}_{i=1}^{n}) + k - 1\right)\frac{\textrm{VC} - \log(\delta)}{m}} \\+ \frac{\textrm{VC} - \log(\delta)}{m}\Big).
    \end{gather*}
    from Corollary 5.2 of \citet{boucheron2005edp}.
    Combining these two inequalities results in the desired result.
\end{proof}

\end{document}